\title{Local minima in training of neural networks}
\author{Grzegorz \'{S}wirszcz, Wojciech Marian Czarnecki \& Razvan Pascanu \\
DeepMind\\
London, UK \\
\texttt{\{swirszcz,lejlot,razp\}@google.com} }
\DeclareMathOperator*{\argmin}{arg\,min}
\newcommand{\themodel}{\mathcal{M}}
\newcommand{\RR}{\mathbb{R}}
\newtheorem{thm}{Theorem}
\newtheorem{prop}{Proposition}
\newtheorem{rem}{Remark}
\newtheorem{lem}{Lemma}
\newtheorem{cor}{Corollary}
\newtheorem{defin}{Definition}
\newtheorem{conj}{Conjecture}
\begin{document}

\maketitle

\begin{abstract}
     Training of a neural network is often formulated as a task of finding a
     ``good'' minimum of an error surface - the graph of the loss expressed as
     a function of its weights. Due to the growing popularity of deep learning,
     the classical problem of studying the error surfaces of neural networks is
     now in the focus of many researchers. This stems from a long standing
     question.  Given that deep networks are highly nonlinear systems optimized
     by local gradient methods, why do they not seem to be affected by
     \textit{bad} local minima?  As much as it is often observed in practice
     that training of deep models using gradient methods works well, little is
     understood about why it happens.  A lot of research efforts has been
     dedicated recently for proving the good behavior of training neural
     networks.  In this paper we adapt the complementary approach of studying
     the possible obstacles. We present several concrete examples of datasets
     which cause the error surface to have a strongly suboptimal local minimum.
\end{abstract}

\section{Introduction}

Deep Learning \citep{lecunNature,schmidhuberDeepLearning} is a fast growing
subfield of machine learning, with many impressive results. Images are being
classified with super-human accuracy (e.g \citet{HeZRS15,SzegedyVISW16}), the
quality of machine translation is reaching new heights (e.g.
\citet{Seq2SeqSutskever14,bahdanau2015,WuSCLNMKCGMKSJL16}). For reinforcement
learning, deep architectures had been successfully used to learn to play Atari
games \citep{mnih2015human, mnih2016asynchronous} or the game of Go
\citep{silver16}.  As always, in case of a fast-progressing domain with
practical application, our theoretical understanding is moving forwards slower
than the fast forefront of empirical success. We are training more and more
complex models, in spite of the fact that this training relies on non-convex
functions which are optimized using local gradient descent methods.  In the
light of these empirical results, many efforts have been made to explain why
the training of deep networks works so well (see the literature review in the
next section). The authors believe, that equally important as it is to try to
understand how and why the training of neural networks behaves well, it is also
to understand what can go wrong.

There are empirical examples of things not always working well. Learning is
susceptible to adversarial examples \citep{NguyenYC15,GoodfellowSS14,Fawzi16}.
Neural networks identifying road stop signs as an interior of a refrigerator,
invisible to human eyes perturbations make a perfectly good model suddenly
misclassify, are some of the well known examples.  Additionally, numerous
``tricks of the trade'', like batch normalization or skip connections, had to
be invented to address slow convergence or poor results of learning.  That is
because, despite the optimism, the ``out of the box'' gradient descent is often
not working well enough.

The approach in this paper is to look for fundamental reasons for training not
behaving well. The goal was to construct as small as possible datasets that
lead to emergence of bad local minima. The error surface is an extremely
complicated mathematical object. The authors believe, that the strategy for
improving our understanding of the structure of error surface is to build a
knowledge base around it. Constructing examples such as the ones presented
serves two main purposed. First, it can help formulate the necessary
assumptions behind theorems showing the convergence of neural network to a good
minima, assumptions that will exclude these datasets.  The second goal is more
practical - to inspire design of better learning algorithms.

\section{Literature Review}
\label{sec:lit_review}

One hypothesis for why learning is well behaved in neural networks is put
forward in \citet{Dauphin13}.  We will refer to it as the ``no bad local
minima'' hypothesis. The key observation of this work is that intuitions from
low-dimensional spaces are usually misleading when moving to high-dimensional
spaces. The work  makes a connection with profound results obtained in statistical
physics. In particular \cite{Fyodorov07,Bray07} showed, using the Replica
Theory \citep{Parisi07}, that random Gaussian error functions have a particular
friendly structure. Namely, if one looks at all the critical points of the
function and plots error versus the (Morse) index of the critical point (the
number of negative eigenvalues of the Hessian), these points align nicely on a
monotonically increasing curve. That is, all points with a low index (note
that every minimum has this index equal to $0$) have roughly the same
performance, while critical points of high error implicitly have a large number
of negative eigenvalue which means they are saddle points.

The claim of \cite{Dauphin13} is that the same structure holds for neural networks
as well, when they become large enough. This provides an appealing conjecture
of not only why learning results in well performing models, but also why it
does so reliably. Similar claim is put forward in \cite{Sagun2014}. These
intuitions can also be traced back to the earlier work of \cite{Baldi89}, which
shows that an MLP with a single \emph{linear} intermediate layer has \emph{no}
local minima, only saddle points and a global minimum. Extensions of these
early results can be found in \cite{Saxe-ICLR2014, Saxe-Cogsci}.

\cite{ChoromanskaHMAL15}  provides a study of the conjecture
that rests on recasting a neural network as a spin-glass model.
To obtain this result several assumptions need to be made, which the authors of
the work, at that time, acknowledged that were not realistic in practice.
The same line of attack is taken by \cite{Kawaguchi16}. Most of these
derivations do not hold in the practical case of finite size datasets
and finite size models.

\cite{Goodfellow16} argues and provides some empirical evidence that while
moving from the original initialization of the model along a straight line to
the solution (found via gradient descent) the loss seems to be only
monotonically decreasing, which speaks towards the apparent convexity of the
problem. \cite{Soudry16, Safran2015} also look at the error surface of the
neural network, providing theoretical arguments for the error surface becoming
well-behaved in the case of overparametrized models.

A different view, presented in \cite{lin2016cheap, Shamir16},
is that the underlying easiness of optimizing deep networks does not
simply rest just in the emerging structures due to high-dimensional spaces, but
is rather tightly connected to the intrinsic characteristics of the data these
models are run on.

\section{Examples of bad local minima}

In this section we present examples of bad local minima. Speaking more
precisely, we present examples of
datasets and architectures such that training using a gradient
descent  can converge to a suboptimal local minumum.

\subsection{Local minima in a sigmoid-based classification}
One of the main goals the authors set for themselves was to show that a sigmoid-based neural network can have a suboptimal ``finite'' local minimum. It turned out that no such example has been widely known to the community, and that there was no agreement to even whether such minimum could exist at all.
\subsubsection{Constructing the example}
By a ``finite'' local minimum we understand a local minimum produced by a set of ``finite'' weights. That is, a minimum that is not caused by some (or all of) sigmoids saturating - trying to become a step-function. In this sense, the minima presented in section~\ref{sec:jelly} are not finite minima.%

The task of constructing this example turned out to be surprisingly difficult. The ability of a sigmoid-based neural network to ``wiggle itself out'' of the most sophisticated traps the authors were creating was both impressive and challenging.

What a few first failed attempts made us realize was, that the nature of a successful example would have to be both geometric and analytic at the same time. What ``deadlocks'' a sigmoid-based neural network is not only the geometric configuration of the points, but also the very precise cross-ratios of distances between them. The successful construction of the presented example was a combination of studying the failed attempts generated by a guesswork and then trying to block the ``escape routes'' with a gradient descent in the data space. Once a ``close enough'' configuration of points was deduced, the gradient descent was applied to the datapoints (in the data space) in order to minimize the length of the gradient in the weights space of the loss function. This procedure modifies the dataset in such a way that the (fixed) set of weights becomes a critical point of the error surface, but starting from a randomly chosen dataset almost surely produces a saddle point, instead of a minimum. Using the ``close enough'' configuration yields a higher chance of finding a true local minimum.%

The authors constructed several examples of local minima for a 2-2-1 (more detailed description below) sigmoid-based neural network, using 16, 14, 12 and 10 datapoints. As of today we know 4 different examples of 10-point datasets that lead to a suboptimal minimum. The authors conjecture this is the minimum amount of points required to ``deadlock'' this architecture, i.e that it is not possible to construct an example using 9 or less points. All the 10-point examples have a geometric configuration resembling a ``figure 8 shape'', as presented in Figure~\ref{fig:sigmoid_deadlock}.
\subsubsection{The example}
Let a dataset $\mathcal{D} = \{ (x_i,y_i) \}_{i=1}^{10}$ be defined as
\begin{eqnarray*}
&&\mathbf{x}_1 = (2.8, 0.4),\; \mathbf{x}_2 = (3.1, 4.3),\; \mathbf{x}_3 = (0.1, -3.4), \; \mathbf{x}_4 = (-4.2, -3.3),\\
&&\mathbf{x}_5 = (-0.5, 0.2),\; \mathbf{x}_6 = (-2.7, -0.4),\; \mathbf{x}_7= (-3., -4.3), \mathbf{x}_8 = (-0.1, 3.4),\\
&&\mathbf{x}_9 =  (4.2, 3.2), \; \mathbf{x}_{10} = ( 0.4, -0.1),
\end{eqnarray*}
$y_1 = \ldots = y_5 =  1$, $y_6 = \ldots = y_{10} =  0$ (see Figure \ref{fig:sigmoid_deadlock}):

\begin{figure}[h]
\centering
{\includegraphics[width=8.5cm]{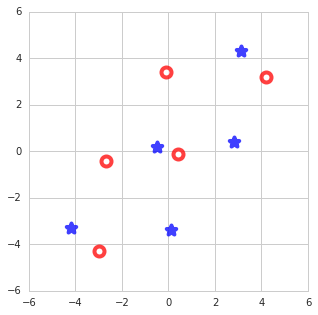}}
\caption{A $10$-point dataset causing the error surface of $2-2-1$
    sigmoid-based Neural Network to have a bad local minimum.}
\label{fig:sigmoid_deadlock}
\end{figure}

\noindent Let us consider a neural network (using notation $\sigma(x) = (1+\exp(-x))^{-1}$)
\begin{equation}
\label{eqn::nn}
\themodel((x_0, x_1)) = \sigma( v_{0} \sigma(w_{0,0} x_0 + w_{0,1} x_1 + b_0) + v_{1} \sigma(w_{1,0} x_1 + w_{1,1} x_2 + b_1) + c)
\end{equation}
and let us use the standard negative log-likelihood cross-entropy loss
\[
\mathcal{L}(v_0, w_{0,0},w_{0,1},b_0v_1,w_{1,0},w_{1,1},b_1,c) = - \sum\limits_{i=1}^{10} y_i \log(\themodel(\mathbf{x}_i)) + (1-y_i) \log(1 - \themodel(\mathbf{x}_i)).
\]

\noindent Then there holds
\begin{thm}
The point $\hat{W}$ in the (9-dimensional) weight space consisting of
\begin{eqnarray*}
&& w_{0,0} = 1.05954587, \;  w_{0,1} = -0.05625762,\\
&& w_{1,0} = -0.03749863, \; w_{1,1} = 1.09518945\\
&& b_0 = -0.050686, b_1 = -0.06894291,\\
&& v_0 = 3.76921058,\; v_1 = -3.72139955,\; c = -0.0148436.
\end{eqnarray*}
    is a local minimum of the error surface of the neural network~(\ref{eqn::nn})
    with negative log-loss value 0.577738 (corresponding to likelihood =
    0.561166). This local minimum has accuracy $0.4$. This point is not a
    global minimum.
\end{thm}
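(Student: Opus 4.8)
The plan is to verify the two standard conditions for a strict local minimum of the smooth function $\mathcal{L}$: that $\hat W$ is a critical point, $\nabla \mathcal{L}(\hat W) = 0$, and that the Hessian $\nabla^2 \mathcal{L}(\hat W)$ is positive definite. First I would record that $\mathcal{L}$ is $C^\infty$ on the entire weight space: each sigmoid output lies strictly in $(0,1)$, so every $\log(\themodel(\mathbf{x}_i))$ and $\log(1-\themodel(\mathbf{x}_i))$ is finite and smooth, and $\mathcal{L}$ is a finite sum of compositions of smooth maps. Consequently the second-order sufficient condition ($\nabla\mathcal{L} = 0$ together with $\nabla^2\mathcal{L} \succ 0$) applies verbatim, and the whole theorem reduces to a (certified) evaluation of derivatives at the single point $\hat W$, plus a comparison value.

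Next I would compute the nine-dimensional gradient in closed form by propagating the chain rule through the output sigmoid, the linear combination $v_0 h_0 + v_1 h_1 + c$, and the two hidden sigmoids $h_j = \sigma(\cdots)$, then sum the contributions of the ten data points and evaluate at $\hat W$. The delicate point is that $\hat W$ is specified only to eight decimal places, so at these exact coordinates $\nabla\mathcal{L}(\hat W)$ will be small but not identically zero. I would therefore treat $\hat W$ as a high-accuracy approximation to a true critical point and make the argument rigorous with a quantitative implicit-function / Newton--Kantorovich step, rather than claim that the printed digits yield an exact vanishing gradient.

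I would then compute the symmetric $9\times 9$ Hessian $H = \nabla^2\mathcal{L}(\hat W)$ and certify that it is positive definite, for instance by exhibiting a Cholesky factorization or a guaranteed lower bound $\lambda_{\min}(H) \ge \lambda_0 > 0$ on its smallest eigenvalue. It is worth noting that the only symmetry of this architecture is the discrete permutation of the two hidden units, which sends $\hat W$ to a \emph{different} isolated point and introduces no continuous flat direction; hence strict positive definiteness is exactly what one should expect, with no forced zero eigenvalue. To upgrade the finite-precision data to a theorem, I would bound the operator norm of $\nabla^3\mathcal{L}$ (equivalently a Lipschitz constant $M$ for $H$) on a small ball $B(\hat W, r)$, and combine $\|\nabla\mathcal{L}(\hat W)\|$, $\lambda_0$, and $M$ in a Newton--Kantorovich estimate: provided $\|\nabla\mathcal{L}(\hat W)\|$ is small enough relative to $\lambda_0^2/M$, Newton's iteration from $\hat W$ converges to a genuine critical point $W^\ast \in B(\hat W, r)$ at which $H(W^\ast)$ remains positive definite, giving a true strict local minimum within rounding distance of the stated weights. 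Ideally every inequality here is discharged with interval arithmetic so the conclusion is fully certified.

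Finally, to show that $\hat W$ is not a global minimum it suffices to exhibit a single weight vector $W'$ with $\mathcal{L}(W') < \mathcal{L}(\hat W)$; since the reported configuration attains accuracy only $0.4$ on the ``figure-8'' dataset, I would plug in an explicit near-separating choice of weights and verify that it achieves strictly smaller loss (and higher accuracy), which certifies suboptimality. The main obstacle throughout is not the algebra of the derivatives, which is routine chain rule, but making the passage from floating-point evidence to a rigorous statement airtight: controlling the nonzero residual gradient and certifying the Hessian's definiteness with guaranteed error bounds. The Newton--Kantorovich / interval-arithmetic step is where the real work lies, since everything else is mechanical once those bounds are in hand.
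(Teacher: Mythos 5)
Your proposal is correct and follows essentially the same skeleton as the paper's proof: verify the second-order sufficient conditions (vanishing gradient, positive definite Hessian) at $\hat W$, then exhibit an explicit comparison point with strictly smaller loss to rule out global optimality. The paper does exactly this, asserting that the gradient at $\hat W$ is $\mathbf{0}$, listing nine positive Hessian eigenvalues, and then displaying a point $W_0$ with loss $0.381913 < 0.577738$ and accuracy $0.8$.

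The one place where you genuinely go beyond the paper is your treatment of finite precision. The paper simply states that the gradient vanishes at the printed eight-decimal-digit coordinates, which cannot be literally true of generic rounded values; your Newton--Kantorovich / interval-arithmetic step addresses exactly this gap, combining a bound on $\|\nabla\mathcal{L}(\hat W)\|$, a certified lower bound on $\lambda_{\min}$ of the Hessian, and a local Lipschitz constant for the Hessian to conclude that a genuine strict local minimum exists within rounding distance of $\hat W$. Note, however, that this proves a slightly different statement than the theorem as written: you obtain a true critical point $W^\ast$ near $\hat W$, not that the printed $\hat W$ is itself a critical point. That is the honest reading of what any numerical verification can deliver, and it is arguably the statement the authors intend, but it is worth being explicit that your certified conclusion is ``there is a suboptimal strict local minimum within distance $r$ of $\hat W$'' rather than the literal claim about the printed coordinates. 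The non-globality argument is unaffected, since $\mathcal{L}$ is continuous and the comparison point beats $\mathcal{L}(W^\ast)$ by a margin far exceeding any rounding effect.
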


\begin{proof}
The gradient of $\themodel$ at $\hat{W} = \mathbf{0}$ and eigenvalues of the Hessian are
\[
\begin{array}{l}
0.0007787149706922058671933702882302\\
0.09566127257833993223676197073566\\
0.1737731623214082676475029319782\\
0.22063866543084709867511532964466\\
0.4155934900503221206301551236848\\
0.9246044147479949855459498868096\\
3.803556801786189964831977345844\\
4.572940690876952005351090155283\\
6.391098807223509384191737359951
\end{array}
\]
so by standard theorems from Analysis the point $\hat{W}$ is a local minimum.\\
At the point $W_{0}$
\begin{eqnarray*}
&& w_{0,0} = 5.67526388,\;  w_{0,0} = 0.50532424\\
&& w_{1,0} = -68.69289398,\; w_{1,1} =  -5.17422295\\
&& b_0 = 3.23905253,\; b_1 =   0.24047163\\
&& v_0 = -44.49337769,\; v_1 =  45.87974167,\; c = -0.69310206
\end{eqnarray*}
    the performance of the model is: accuracy $= 0.8$, loss $= 0.381913$,
    likelihood $= 0.682555$, therefore $\hat{W}$ is not a global minimum.
\end{proof}

\begin{rem}
    It is worth noting that we are not claiming that $W_0$ is a global minimum. In
    fact it is not even a minimum at all. Note the very high values of $v_0$
    and $v_1$. It happens because the final sigmoid is struggling to
    approximate a step function. It is a common phenomenon in training neural
    networks, the training does not converge to a minimum, but gets stopped
    while trying to converge to a point at infinity.
\end{rem}

\begin{rem}
The point
\begin{eqnarray*}
&& w_{0,0} = 22.3641243,\;  w_{0,1} = -12.53928375,\\
&& w_{1,0} = -44.85849762,\; w_{1,1} -3.51257443\\
&& b_0 = -35.75595093,\; b_1 = -23.58968163\\
&& v_0 = 15.43178844,\; v_1-15.02632332,\; c = -0.40546528
\end{eqnarray*}
with loss $= 0.475135$,  likelihood $= 0.621801$ accuracy $= 0.7$ is yet
    another suboptimal point of convergence of the training.
\end{rem}

\subsection{Local minima in a rectifier-based regression}

Rectifier-based models are the de facto standard in most applications of
neural networks. In this section  we present 3 examples of local minima for
regression using a single layer with 1, 2 and 3 hidden rectifier units on
$1$-dimensional data (see  Figure~\ref{fig:example_minima}).

\begin{rem}
For the ReLU-s, the activation function is simply the $\max$ between
$0$ and the linear projection of the input. Hence, it has two modes of operation,
it is either in the linear regime or the saturated regime. Obviously,  no gradient flows through a
saturated unit, hence a particular simple mechanism for locking a network in a suboptimal
solution is to have a subset of datapoints such that all units (e.g. on a given layer) are
saturated, and there is no gradient for fitting those points. We will refer to such
points as being in the {\rm blind spot} of the model and explore this phenomenon more properly
in section \ref{sec:blind_spot}.
\end{rem}

\begin{rem}
The examples presented in this section go beyond relying solely on the of blind-spots of the model.
\end{rem}

For the sake of simplicity of our presentation we will describe in detail the
case with 1 hidden neuron, the other two cases can be treated similarly. In
case of one hidden neuron the regression problem becomes
\begin{equation}
\label{eqn::ReLU_regression_1}
\argmin\limits_{w, b, v, c} \mathcal{L}(w, b, v, c) = \sum_{i=1}^{n} \left( v \cdot \mathrm{ReLU}(w {x}_i + b) + c - {y}_i\right)^2.
\end{equation}

\noindent Consider a dataset $\mathcal{D}_1$ (see Figure \ref{fig:example_minima}~(a)):
\[
({x}_1, y_1) = (5,2), ({x}_2, y_2) = (4,1), ({x}_3, y_3) = (3,0), ({x}_4, y_4) = (1,-3), ({x}_5, y_5) = (-1,3).
\]

\begin{prop}
\label{prop:particularCase}
For the dataset $\mathcal{D}_1$ and $\mathcal{L}$ defined in Equation (\ref{eqn::ReLU_regression_1}) the point $v = 1, b = -3, w = 1, c=0$ is a local minimum of $\mathcal{L}$, which is not a global minimum.
\end{prop}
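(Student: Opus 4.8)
The plan is to exploit the blind-spot structure of the proposed point directly, and to avoid any second-order (Hessian) test. Such a test is unavailable here in any case, because $\mathcal{L}$ is not differentiable at the proposed point (the third datapoint sits exactly on the ReLU kink) and, as I expect to find, the minimum is non-strict. First I would record the activations at $\hat{W}=(w,b,v,c)=(1,-3,1,0)$. Since $w x_i + b = x_i - 3$, one gets $z_1 = 2,\ z_2 = 1$ (strictly active), $z_3 = 0$ (exactly on the kink), and $z_4 = -2,\ z_5 = -4$ (strictly inactive). The model outputs are then $\mathrm{ReLU}(x_i-3)$, so the residuals are $0,0,0,3,-3$ and $\mathcal{L}(\hat{W}) = 18$; in particular $x_4$ and $x_5$ lie in the blind spot.

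The key step is the following comparison. For a perturbation $(1+\alpha,\,-3+\beta,\,1+\gamma,\,\delta)$ with $(\alpha,\beta,\gamma,\delta)$ small, the inequalities $w x_4 + b = -2+\alpha+\beta < 0$ and $w x_5 + b = -4-\alpha+\beta < 0$ persist on an open neighborhood, so $x_4$ and $x_5$ stay saturated and both output exactly $c=\delta$. Their combined contribution is therefore $(\delta+3)^2 + (\delta-3)^2 = 2\delta^2 + 18 \ge 18$ for every such perturbation. Since the three remaining squared residuals are nonnegative, $\mathcal{L} \ge 18 = \mathcal{L}(\hat{W})$ on the whole neighborhood, which is exactly the statement that $\hat{W}$ is a local minimum. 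It is worth emphasizing that this argument never examines $x_1,x_2,x_3$ and never needs to resolve on which side of the kink $x_3$ lands after perturbation: the entire lower bound is supplied by the saturated pair alone.

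I expect the only genuine subtlety to be the neighborhood bookkeeping, namely verifying that the two blind-spot constraints are \emph{strict} (hence stable under perturbation), together with the observation that the antisymmetry $y_4=-3,\ y_5=3$ is what makes $2\delta^2+18$ minimal precisely at the base value $c=0$. A complementary remark I would add is why no Hessian argument could have worked: the minimum is non-strict, since along the reparametrization curve $vw=1,\ vb=-3$ (for instance $v=1/(1+\alpha),\ w=1+\alpha,\ b=-3(1+\alpha),\ c=0$) the predictor $x\mapsto v\,\mathrm{ReLU}(wx+b)+c$ is unchanged and $\mathcal{L}$ stays at $18$. The direct inequality sidesteps both the non-differentiability at the kink and this degeneracy.

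Finally, to certify that $\hat{W}$ is not a global minimum, I would exhibit a competing fit of the opposite monotonicity: a ReLU with the kink placed to the left, e.g. $w=-1,\ b\approx 1.2$, so that only $x_5$ (and marginally $x_4$) remains active while the three rightmost points saturate to $c$. Choosing $v,c$ by ordinary least squares for this fixed kink yields a loss strictly below $18$ (around $15.6$), which proves that the value $18$ attained at $\hat{W}$ is not optimal.
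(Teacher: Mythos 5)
Your proof is correct, and on the local-minimality half it is genuinely leaner than the paper's. The paper proves the same inequality by splitting into two cases according to whether the ReLU is active at $x_3$ after perturbation, expanding all five residuals, and introducing the substitution $x=(\delta_w+1)(1+\delta_v)-1$, $y=(\delta_b-3)(1+\delta_v)+3$, $z=\delta_c$, under which the loss becomes a sum of squares plus $2z^2+18$ (resp.\ $3z^2+18$ in the deactivated case). Your observation that the two saturated points alone already force
$\mathcal{L}\ge(\delta_c+3)^2+(\delta_c-3)^2=2\delta_c^2+18\ge 18$,
with every other term discardable as a nonnegative square, removes both the case analysis and the change of variables. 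The underlying mechanism is identical (stability of the blind spot at $x_4,x_5$ under small perturbations), but your version makes explicit that the active points and the kink point $x_3$ are irrelevant to the lower bound, which is a real simplification.

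Two smaller points. For non-globality the paper simply evaluates $\mathcal{L}(-7,-4,1,0)=4+1+0+9+0=14<18$; your left-facing-kink witness also works, but as written it leans on an unexhibited least-squares solve (``around $15.6$''). To make it a proof, fix explicit numbers: $(w,b,v,c)=(-1,\,1.2,\,1.2,\,0)$ gives $4+1+0+(3.24)^2+(0.36)^2=15.6272<18$. Also, your side remark that $\mathcal{L}$ is not differentiable at $\hat{W}$ is not quite right: since $\max(u,0)^2$ is $C^1$ and the residual at $x_3$ vanishes at $\hat{W}$, the loss is in fact (once) differentiable there with zero gradient; what fails is second-order differentiability, and, as you correctly note, a Hessian test would be moot anyway because the minimum is non-strict --- the paper makes the same observation about the one-dimensional manifold on which $\mathcal{L}\equiv 18$. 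Neither issue affects the validity of your argument.
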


\begin{proof}
There holds $\mathcal{L}(1,-3,1,0) = 0+0+0+9+9 = 18$, and $\mathcal{L}(-7,-4,1,0) = 4 + 1 + 0 + 9 + 0 = 14$, thus $(1,-3,1,0)$ cannot be a global minimum. It remains to prove that $(1,-3,1,0)$ is a local minimum, i.e. that $\mathcal{L}(1+\delta_w,-3+\delta_b,1+\delta_v,\delta_c) \ge \mathcal{L}(1,-3,1,0)$ for $|\delta_w|$, $|\delta_b|$, $|\delta_v|$, $|\delta_c|$ sufficiently small. We need to consider two cases:\\
{\bf ReLU activated at } ${x}_3$. In that case
\[
\mathcal{L}(1+\delta_w,-3+\delta_b,1+\delta_v,\delta_c) =
\]
\[
\left((1+\delta_v)(3 + 3 \delta_w - 3 + \delta_b) + \delta_c\right)^2 + \left((1+\delta_v)(4 + 4 \delta_w - 3 + \delta_b) + \delta_c -1\right)^2 +
\]
\[
\left((1+\delta_v)(5 + 5 \delta_w - 3 + \delta_b) + \delta_c -2\right)^2 + (\delta_c+3)^2+(\delta_c-3)^2.
\]
We introduce new variables $x = (\delta_w+1)(1+\delta_v)-1$, $y = (\delta_b-3)(1+\delta_v) + 3$, $z = \delta_c$. The formula becomes
\[
\left(3 x + y + z\right)^2 + \left(4 x + y + z\right)^2 +
\left(5 x + y + z\right)^2 + 2 z^2 + 18 \ge 18,
\]
which ends the proof in this case.\\
{\bf ReLU deactivated at } ${x}_3$. In that case
\[
\mathcal{L}(1+\delta_w,-3+\delta_b,1+\delta_v,\delta_c) = \delta_c^2 + \left((1+\delta_v)(4 + 4 \delta_w - 3 + \delta_b) + \delta_c -1\right)^2 +
\]
\[
\left((1+\delta_v)(5 + 5 \delta_w - 3 + \delta_b) + \delta_c -2\right)^2 + (\delta_c+3)^2+(\delta_c-3)^2 =
\]
\[
\left(4 x + y + z\right)^2 +
\left(5 x + y + z\right)^2 + 3 z^2 + 18 \ge 18
\]
(we used $x = (\delta_w+1)(1+\delta_v)-1$, $y = (\delta_b-3)(1+\delta_v) + 3$, $z = \delta_c$ again).\\
Note that due to the assumption that $|\delta_w|$, $|\delta_b|$, $|\delta_v|$, $|\delta_c|$ are sufficiently small the ReLU is always activated at ${x}_1$, ${x}_2$ and deactivated at ${x}_4,{x}_5$.
\end{proof}

\begin{rem}
The point $(1,-3,1,0)$ is a minimum, but it is not a ``strict'' minimum - it is
    not isolated, but lies on a 1-dimensional manifold at which $\mathcal{L}
    \equiv 18$ instead.
\end{rem}

\begin{rem} The following examples show that blind spots are not the only reason a model can be stuck in a suboptimal solution. Even more surprisingly, they also show that the blind spots can be completely absent in the local optima, while at the same time being present in the global solution.
\end{rem}

\begin{figure}[t]
    \centering

    a) \includegraphics[width=7.5cm]{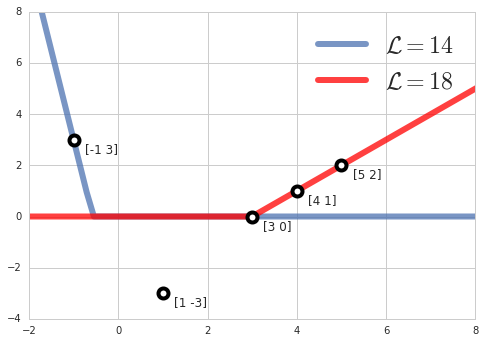}\\
    b) \includegraphics[width=7.5cm]{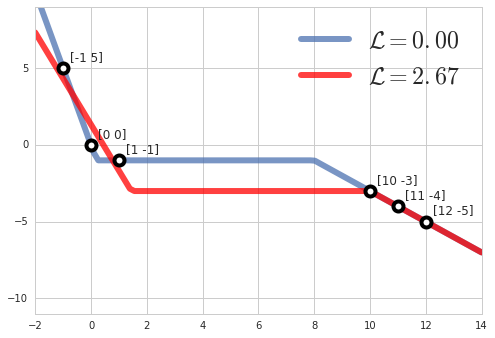}\\
    c) \includegraphics[width=7.5cm]{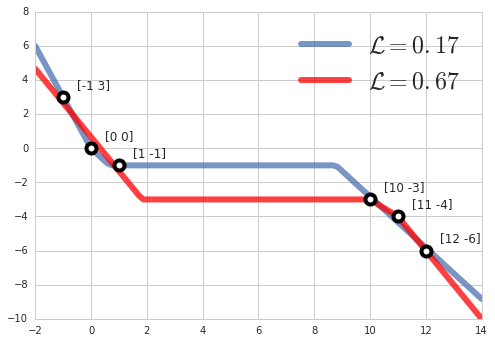}

\caption{Local minima for ReLU-based regression. Both lines represent local optima, where the blue one is better than the red one. a) 1 hidden neuron b) 2 hidden neurons c) 3 hidden neurons.}
\label{fig:example_minima}
\end{figure}

\begin{prop}
Let us consider a dataset $\mathcal{D}_2$ with $d=1$, given by points
    $(x_1,y_1)=(-1,5), (x_2,y_2)=(0, 0), (x_3,y_3)=(1,-1), (x_4,y_4)=(10,-3),
    (x_5,y_5)=(11,-4), (x_6,y_6)=(12,-5)$ (Figure \ref{fig:example_minima}
    (b)). Then, for a rectifier network with $m=2$ hidden units and a squared
    error loss the set of weights $\mathbf{w}=(-5, -1), \mathbf{b}=(1, -8),
    \mathbf{v}=(1,-1), c=-1$ is a global minimum (with perfect fit) and the set
    of weights $\mathbf{w}=(-3, -1), \mathbf{b}=(4+\tfrac{1}{3}, -10),
    \mathbf{v}=(1,-1), c=-3$ is a suboptimal local minimum.
\end{prop}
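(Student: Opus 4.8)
The plan is to reuse the two-part template of Proposition~\ref{prop:particularCase}, leaning on the fact that the squared-error loss $\mathcal{L}$ is a sum of squares and therefore bounded below by $0$. For the global minimum I would evaluate the network at all six inputs under the stated weights, keeping track of the three linear regions cut out by the two hidden breakpoints: the leftmost region is meant to interpolate the small-$x$ points, the rightmost region the large-$x$ points $x_4,x_5,x_6$ (which are collinear, hence fit exactly by a single affine piece), and the remaining middle point is absorbed by the flat region governed by the output bias $c$. Checking $\mathcal{M}(x_i)=y_i$ for every $i$ gives $\mathcal{L}=0$; since $\mathcal{L}\ge 0$ everywhere, this is automatically a global minimum with perfect fit, and nothing more is required.

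For the second point, suboptimality follows the instant I compute $\mathcal{L}$ there and find it strictly positive, hence strictly above the global value $0$. The substance is local minimality, which I would establish exactly as in Proposition~\ref{prop:particularCase} by a case analysis on the rectifier activation pattern. First I would record the sign of each preactivation $w_j x_i + b_j$ at the candidate: every data point lies strictly inside its activation region for both units except a single point sitting exactly on the breakpoint of the second unit. For all strictly-interior points the activation pattern is locally constant under sufficiently small perturbations of the weights, so the only point that needs branching is that one boundary point.

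I would then split according to whether the second unit is on or off at that boundary point. In each branch the activation pattern is frozen, so on each cluster the output is affine in the input and $\mathcal{L}$ is a quadratic in the weights. Introducing the reduced variables $A_j=v_j w_j$ and $B_j=v_j b_j + c$ (together with $c$ on the cluster where both units are off) linearizes the residuals; because $v_0,v_1\neq 0$ at the candidate these reduced variables can be perturbed independently, so $\mathcal{L}$ decouples into separate convex least-squares problems, one per cluster. The key verification is that the candidate weights realize the exact least-squares optimum of each decoupled piece --- the best affine fit to the left cluster $\{x_1,x_2,x_3\}$ and the exact fit to the collinear right cluster --- so that in each branch $\mathcal{L}$ equals a sum of squared residuals plus the same constant, minimized precisely at the candidate. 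Confirming that the two branches agree at the seam then yields $\mathcal{L}\ge\mathcal{L}(\hat W)$ throughout a neighborhood. The geometric reason the point is trapped is that its first breakpoint lands to the right of the middle point, forcing the three non-collinear left points onto one affine piece and leaving an irreducible residual, whereas the global solution spends the same two breakpoints so that each piece sees only collinear data.

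The step I expect to be the main obstacle is precisely the boundary point on the second unit's breakpoint: because a small perturbation can flip it from one cluster to the other, I cannot treat the activation pattern as globally fixed and must solve the two one-sided quadratic problems separately and check that they match at the breakpoint, just as the earlier proof handled activation and deactivation at $x_3$. A secondary point to get right is that the minimum is not strict --- the rectifier's positive homogeneity $(v_j,w_j,b_j)\mapsto(t v_j, w_j/t, b_j/t)$ leaves $\mathcal{L}$ unchanged --- so I would phrase the conclusion as $\mathcal{L}\ge\mathcal{L}(\hat W)$ rather than a strict inequality and present the candidate as a (non-isolated) minimizer of each least-squares reduction rather than appealing to a positive-definite Hessian.
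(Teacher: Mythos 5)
Your proposal is correct and takes essentially the same route as the paper: the paper's proof of this proposition is literally ``analogous to the previous one,'' i.e.\ the case split on the ReLU activation state at the single boundary datapoint ($x_4$, which sits exactly on the second unit's breakpoint) plus a reparametrization reducing each branch to convex least-squares problems with the candidate attaining each piece's optimum --- exactly what you spell out. One remark: you implicitly work with the intended sign $w_2=+1$ (breakpoint at $x=10$); the paper's stated $\mathbf{w}=(-3,-1)$ and $\mathbf{w}=(-5,-1)$ appear to contain sign typos, since as literally written the second unit never activates on this dataset and neither set of weights produces the claimed fits.
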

\begin{proof}
Analogous to the previous one.
\end{proof}

\noindent Maybe surprisingly, the global solution has a \emph{blind spot} - all neurons
deactivate in ${x}_3$. Nevertheless, the network still has a $0$ training
error.

\begin{prop}
\label{prop:particularCase3}
Let us consider a dataset $\mathcal{D}_3$ with $d=1$, given by points $(x_1,
    y_1)=(-1,3), (x_2, y_2)=(0, 0), (x_3,y_3)=(1,-1), (x_4,y_4)=(10,-3),
    (x_5,y_5)=(11,-4),(x_6,y_6)=(12,-6)$ (Figure \ref{fig:example_minima} (c)).
    Then, for a rectifier network with $m=3$ hidden units and a squared error
    loss the set of weights $\mathbf{w}=(-1.5, -1.5, 1.5), \mathbf{b}=(1, 0,
    -13 - \tfrac{1}{6}), \mathbf{v}=(1,1,-1), c=-1$ is a better local minimum
    than the local minimum obtained for $\mathbf{w}=(-2, 1,1),
    \mathbf{b}=(3+\tfrac{2}{3}, -10, -11), \mathbf{v}=(1,-1,-1), c=-3$.
\end{prop}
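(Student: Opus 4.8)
\section*{Proof proposal}

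The plan is to follow exactly the strategy of Proposition~\ref{prop:particularCase}. I would reparametrise each hidden unit by its \emph{effective} slope $a_j := v_j w_j$ and \emph{effective} bias $e_j := v_j b_j$, so that on any region of weight space where the activation pattern of the three ReLUs is fixed, each output $\themodel(x_i)$ is an affine function of the vector $(a_1,e_1,a_2,e_2,a_3,e_3,c)$ and the loss becomes a convex quadratic, namely a sum of squares of affine forms. For each of the two candidate weight vectors I would first read off its activation pattern by evaluating the six pre-activations $w_j x_i + b_j$, which records on which data points each unit is active. Before anything else I would record the two loss values, $\mathcal{L}=\tfrac16$ at the first point and $\mathcal{L}=\tfrac23$ at the second: a direct substitution shows the first network fits $x_1,x_2,x_3$ exactly and leaves residuals $\tfrac16,-\tfrac13,\tfrac16$ on $x_4,x_5,x_6$, while the second fits $x_4,x_5,x_6$ exactly and leaves $-\tfrac13,\tfrac23,-\tfrac13$ on $x_1,x_2,x_3$. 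Since $\tfrac16<\tfrac23$, the first candidate is the better one, which settles the comparative part of the claim.

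To prove each candidate is a genuine local minimum I would show it is the unconstrained minimiser of the convex quadratic attached to its activation cell. Since that quadratic is $\sum_i(\text{affine in }(a,e,c)-y_i)^2$, it suffices to verify the normal equations: the residual vector must be orthogonal to every active feature direction, i.e. $\sum_i r_i=0$, and for each unit $\sum_{i:\text{active}} r_i=0$ and $\sum_{i:\text{active}} r_i x_i=0$. These are a handful of one-line checks at both candidates; they hold because in each case the nonzero residuals are supported on a non-collinear triple that is being fit by a single active line, and are therefore $L^2$-orthogonal to $1$ and to $x$ on that triple (e.g. $\tfrac16-\tfrac13+\tfrac16=0$ and $\tfrac{10}{6}-\tfrac{11}{3}+\tfrac{12}{6}=0$ for the first candidate). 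Convexity then makes the candidate a global minimiser of that quadratic. Finally, since every $v_j\neq0$, the map $(\delta_{w_j},\delta_{b_j})\mapsto(a_j,e_j)$ has invertible Jacobian and the full reparametrisation is a local submersion onto $\RR^7$; hence, as the true weights vary over a small ball, the effective parameters sweep out a full neighbourhood and the candidate minimises $\mathcal{L}$ over an honest weight-neighbourhood.

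The delicate step, and the one I expect to be the real obstacle, is that neither activation pattern is constant in a whole neighbourhood: at the first candidate unit~$2$ sits exactly on its kink at $x_2$, and at the second candidate units~$2$ and~$3$ sit on their kinks at $x_4$ and $x_5$. A small perturbation can push the relevant point to either side of the corresponding ReLU boundary, so $\mathcal{L}$ is only piecewise quadratic and I must exclude a descent direction that switches a boundary unit on or off. As in Proposition~\ref{prop:particularCase} I would split into the two (respectively four) cases and rerun the normal-equation check on each quadratic piece, the pieces agreeing on the shared boundary by continuity. The reason this succeeds is structural: each kink lies at a data point whose residual is exactly $0$ at the candidate ($r_2=0$ in the first case, $r_4=r_5=0$ in the second), so switching that unit on contributes only a term $v_j\,\mathrm{ReLU}(w_j x_i+b_j)$ multiplying an already-matched output; its first-order effect on the loss vanishes, the candidate stays a critical point of each adjacent piece, and convexity of every piece upgrades this to a minimum on both sides.

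As with the single-neuron example, these minima are non-strict: units active on only one point, or whose slope is left unconstrained by the residuals, together with the $\delta_{v_j}$ reparametrisation redundancy, produce flat directions along which $\mathcal{L}$ is locally constant. This does not affect the argument, since a local minimum only requires $\mathcal{L}\ge\mathcal{L}(\hat W)$ nearby, which the sum-of-squares lower bound supplies directly in each activation cell.
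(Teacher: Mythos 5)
Your proposal is correct and takes essentially the same approach as the paper: the paper's proof is a one-line appeal to the method of Proposition~\ref{prop:particularCase} together with exactly the two facts you verify --- that on one-active-neuron regions the candidate realises the optimal least-squares line (your normal equations, e.g. $\tfrac16-\tfrac13+\tfrac16=0$), and that on two-active-neuron regions the fit is perfect (your zero residuals). Your explicit case-split at the kink points (unit~2 at $x_2$ for the first candidate, units~2 and~3 at $x_4,x_5$ for the second), justified by the zero residuals there, is a detail the paper's terse proof leaves implicit, but it is the same argument, not a different route.
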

\begin{proof}
Completely analogous, using the fact that in each part of the space linear
    models are either optimal linear regression fits (if there is just one
    neuron active) or perfect (0 error) fit when two neurons are active and
    combined.
\end{proof}

Note that again that the above construction is not relying on the \emph{blind
spot} phenomenon. The idea behind this example is that if, due to initial
conditions, the model partitions the input space in a suboptimal way, it might
become impossible to find the optimal partitioning using gradient descent. Let
us call $(-\infty, 6)$ the region I, and $[6, \infty)$ region II.
Both solutions in Proposition~\ref{prop:particularCase3} are constructed in
such way that each one has the best
fit for the points assigned to any given region, the only difference being the
number of hidden units used to describe each of them.
In the local optimum two neurons are used to describe region II, while only one
describes region I. Symmetrically, the better solution assigns two neurons to
region I (which is more complex) and only one to region II.

\begin{conj}
We conjecture that the core idea behind this construction can be generalized
    (in a non-trivial way) to high-dimensional problems.
\end{conj}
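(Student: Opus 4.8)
The plan is to isolate the abstract mechanism behind Proposition~\ref{prop:particularCase3} and realize it genuinely in $\mathbb{R}^d$, deliberately avoiding the trivial route of embedding the one-dimensional example along a single coordinate axis. The mechanism has three ingredients: (i) the data splits into spatially separated \emph{clusters}; (ii) fitting the data inside a cluster requires a prescribed number of affine pieces, i.e.\ a prescribed number of hidden units; and (iii) the total unit budget $m$ equals the sum of the per-cluster demands, so any reallocation starves one cluster while over-serving another. First I would formalize (ii): call a cluster $C_j$ \emph{$p_j$-hard} if the best-fit target restricted to $C_j$ is piecewise-affine with exactly $p_j$ necessary pieces, so that any ReLU sub-network with fewer than $p_j$ units attains strictly positive squared error on the points of $C_j$. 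In one dimension the regions I and II play exactly this role, with demands $2$ and $1$.

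Next I would build the dataset. Choose clusters $C_1,\dots,C_k$, each inside a small ball, with the balls mutually far apart and placed at the vertices of a simplex so that every cluster is extremal, and populate $C_j$ with points lying on a $p_j$-hard piecewise-affine profile that genuinely uses all $d$ coordinates --- for instance a $d$-dimensional convex ``tent'' whose minimal number of affine cells grows with $d$; this growth is what makes the generalization non-trivial. With $m=\sum_j p_j$ units, the global minimum assigns $p_j$ units to $C_j$: because the clusters are well separated and extremal, each such unit's boundary hyperplane can be placed to cut the internal pieces of $C_j$ while leaving every other cluster in its inactive, zero-output halfspace, so the unit contributes nothing on foreign clusters. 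The per-cluster fits then couple only through the single shared output bias $c$, a one-dimensional residual that is fixed by construction. The competing local minimum is obtained by permuting the allocation --- giving some $p_j$-hard cluster fewer than $p_j$ units and another cluster the surplus --- and taking, inside each cluster, the best fit achievable with the units it has received.

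The local-minimality argument then rests on the separation. For weight perturbations small enough, every unit keeps a strictly positive margin on each foreign cluster, so its activation there does not flip; consequently the loss decomposes, up to the shared bias, as $\mathcal{L}=\sum_j \mathcal{L}_j$, where $\mathcal{L}_j$ depends only on the units whose boundary meets $C_j$ and equals the squared-error loss of a small ReLU sub-network on the points of $C_j$. Each summand is a local minimum of its own sub-problem by the same activation-pattern case analysis as in Proposition~\ref{prop:particularCase}, now carried out for hyperplanes meeting the cluster rather than for a single threshold point. Cross-cluster reallocation is blocked because transporting a unit's hyperplane out of $C_j$ and into $C_{j'}$ forces it through the wide separating gap, where it is useless to both clusters and the starved cluster's residual error is paid in full: this loss barrier is what traps the configuration, exactly as the region-wise description following Proposition~\ref{prop:particularCase3} suggests.

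The main obstacle is the intra-cluster step. In one dimension each ReLU boundary is a single point, so a perturbation flips at most one datapoint across it and there are only finitely many linear cases to check, as the proof of Proposition~\ref{prop:particularCase} shows. In $\mathbb{R}^d$ the boundary is a hyperplane, and perturbing $(\mathbf{w},b)$ both rotates and translates it, so several points can cross simultaneously in correlated ways; the admissible activation patterns near the critical point form a fan of polyhedral cones in weight space, and one must verify that the piecewise-quadratic loss is non-decreasing along \emph{every} cone. Moreover, as the remarks after Proposition~\ref{prop:particularCase} note, these minima are not strict --- the $v$--$w$ rescaling symmetry produces flat directions, so the Hessian is singular and second-order conditions alone do not settle the matter. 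The genuinely non-trivial design task is therefore to arrange the intra-cluster points so that every polyhedral direction is provably a non-descent direction while the cluster remains honestly $d$-dimensional. Establishing this for all $d$, rather than certifying one fixed case numerically as was done for the concrete examples above, is the crux that separates the conjecture from the theorems already proved.
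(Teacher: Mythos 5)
The statement you are addressing is explicitly left as an open conjecture in the paper: there is no proof there to compare against, and the discussion following Proposition~\ref{prop:particularCase3} (regions I and II with unit demands $2$ and $1$) is exactly the informal allocation picture you formalize. Your text, however, is a research program rather than a proof, and you say so yourself in the last paragraph: the intra-cluster local-minimality step --- showing the piecewise-quadratic loss is non-decreasing along \emph{every} cone of the fan of activation patterns in weight space --- is left unestablished. That admission is precisely the gap. In the paper's one-dimensional propositions the case analysis closes because a ReLU boundary is a single point and the substitution $x=(\delta_w+1)(1+\delta_v)-1$, $y=(\delta_b-3)(1+\delta_v)+3$, $z=\delta_c$ turns each of the two activation cases into a manifestly nonnegative quadratic plus the constant $18$; nothing analogous is supplied for hyperplanes, and since (as the paper remarks) these minima are non-strict with flat rescaling directions, one cannot retreat to a second-order/Hessian argument either. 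Without that step there is no theorem, only a plausible blueprint.

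Two further points in your sketch would also need repair before the program could go through. First, the decomposition $\mathcal{L}=\sum_j\mathcal{L}_j$ requires every unit whose hyperplane cuts $C_j$ to keep \emph{all} foreign clusters on its inactive side; but a cut needed for a genuinely $d$-dimensional profile has an essentially arbitrary orientation, its active halfspace is unbounded, and flipping the sign of $(\mathbf{w},b)$ to protect a foreign cluster reverses which side of the cut is active inside $C_j$ --- so the admissible orientations are constrained in a way that may be incompatible with the tent profile you posit, simplex placement notwithstanding. Second, your definition of a $p_j$-hard cluster presupposes a lower bound on the number of hidden units needed to fit a finite point set in $\mathbb{R}^d$: a one-hidden-layer ReLU network computes a sum of hinge functions whose ridge set is a union of \emph{full} hyperplanes, so the number of affine cells of the target profile does not directly lower-bound the unit count on a finite sample, and no construction certifying such hardness is given. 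In short, your proposal is a reasonable and honest elaboration of the paper's own intuition, correctly identifying where the difficulty lies, but it leaves the conjecture as open as the paper does.
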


\subsection{The flattened XOR - suboptimal models in classification using ReLu and sigmoids}
\label{sec:jelly}
In this section we look at a slight variation on one of the most theoretically well-studied datasets, the XOR problem. By exploiting observations made in the failure modes observed for the XOR problem, we were able to construct a similar dataset, the ``flattened XOR'', that results in suboptimal learning dynamics. The dataset is formed of four datapoints, where the positive class is given by  $(1.0, 0.0), (0.2, 0.6)$ and the negative one by $(0.0, 1.0), (0.6, 0.2)$, see Figure~\ref{fig:jellyfish_dataset}.
\begin{figure}[h]
\centering
{\includegraphics[width=7.5cm]{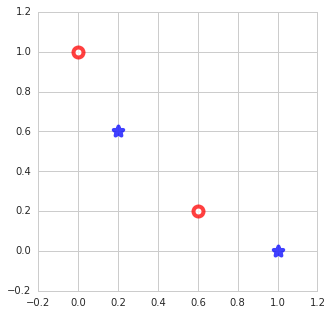}}
\caption{The ``flattened XOR'' dataset.}
\label{fig:jellyfish_dataset}
\end{figure}
We analyze the dataset using a single hidden layer network (with either ReLU or sigmoid units).

A first observation is that while SGD can solve the task with only 2 hidden units,
full batch methods do not always succeed. Replacing gradient descent with more aggressive optimizers like Adam does not seem to help, but rather tends to make it more likely to get stuck in suboptimal solutions (Table~\ref{tab:small_convergence}).

\begin{figure}[h]
\centering
a) \includegraphics[width=10.5cm]{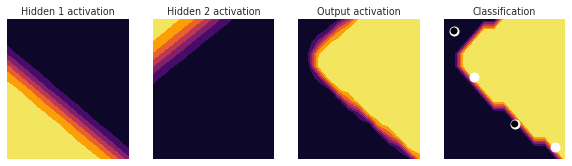}
\\
b) \includegraphics[width=10.5cm]{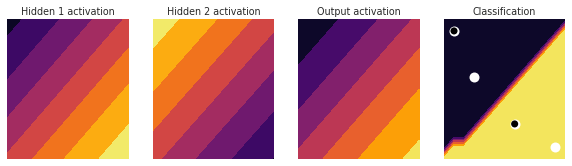}
\caption{Examples of different outcomes of learning on the flattened XOR dataset. a) Optimally converged net for flattened XOR. b) Stuck net for flattened XOR.}
\label{fig:jellyfish}
\end{figure}

\begin{table}
\center
\begin{tabular}{l|rrrrr|rrrrr}
    \toprule
   $h$ &   & XOR & XOR & fXOR & fXOR
   &   & XOR & XOR & fXOR & fXOR\\
    &   & ReLU & Sigmoid & ReLU & Sigmoid &
   & ReLU & Sigmoid & ReLU & Sigmoid\\

    \midrule
   2& Adam  & 28\% & 79\%   & 7\%       & 0\%   &
   GD&   23\% & 90\%   & 16\%       & 62\%       \\
   3&Adam  & 52\% & 98\%   & 34\%      & 0\%        &
   GD  & 47\% & 100\%   & 33\%      & 100\%      \\
   4&Adam  & 68\% & 100\%  & 50\%      & 2\%        &
   GD  & 70\% & 100\%  & 66\%      & 100\%        \\
   5&Adam  & 81\% & 100\%  & 51\%      & 27\%       &
   GD  & 80\% & 100\%  & 68\%      & 100\%      \\
   6&Adam  & 91\% & 100\%  & 61\%      & 17\%       &
   GD  & 89\% & 100\%  & 69\%      & 100\%     \\
   7&Adam  & 97\% & 100\%  & 69\%      & 58\%       &
   GD  & 89\% & 100\%  & 86\%      & 100\%      \\
    %\midrule
   %GD&  2  & 23\% & 90\%   & 16\%       & 62\%       \\
   %GD&  3  & 47\% & 100\%   & 33\%      & 100\%      \\
     \bottomrule
\end{tabular}
\caption{``Convergence'' rate for 2-$h$-1 network with random initializations on simple 2-dimensional datasets using either Adam or Gradient Descent (GD) as an optimizer. Comparison between ``regular'' XOR and fXOR - the flattened XOR}
\label{tab:small_convergence}
\end{table}

Compared to the XOR problem it seems the flattened XOR problem poses even more issues, especially for ReLU units, where with 4 hidden units one still only gets 2 out of 3 runs to end with 0 training error (when using GD). One particular observation (see Figure~\ref{fig:jellyfish})  is that in contrast with good solutions, when the model fails on this dataset, its behaviour close to the datapoints is almost linear. We argue hence, that the failure mode might come from having most datapoints concentrated in the same linear region of the model (in ReLU case), hence forcing the model to suboptimally fit these points.

\begin{rem}
In the examples we used ReLU and sigmoid activation functions, as they are the most common used in practice. The similar examples can be constructed for different activation functions, however the constructions need some modifications and get more technically complicated.
\end{rem}

\section{Bad initialization}
\label{sec:blind_spot}
In this section we prove formally a seemingly obvious, but often overlooked fact that for any regression dataset a rectifier model has at least one local minimum. The construction relies on the fact that the dataset is finite. As such, it is bounded, and one can compute conditions for the weights of any given layer of the model such that for any datapoint all the units of that layer are saturated. Furthermore, we show that one can obtain a better solution than the one reached from such a state. The formalization of this result is as follows.

We consider a $k$-layer deep regression model using $m$ ReLU units $\mathrm{ReLU}(x) = \max(0,x)$.  Our dataset is a collection $(\mathbf{x}_i,y_i) \in \RR^d \times \RR$, $i=1,\ldots,N$.
We denote $\mathbf{h}_n(\mathbf{x}_i) = \mathrm{ReLU}(\mathbf{W}_n \mathbf{h}_{n-1}(\mathbf{x}_i) + \mathbf{b}_n)$ where the the $\mathrm{ReLU}$ functions are applied component-wise to the vector $\mathbf{W}_n \mathbf{h}_{n-1}(\mathbf{x}_i)$ and $\mathbf{h}_0(\mathbf{x}_i) = \mathbf{x}_i$.
We also denote the final output of the model by $\themodel(\mathbf{x}_i) = \mathbf{W}_{k}\mathbf{h}_{k-1} + \mathbf{b}_{k}$.
Solving the regression problem means finding
\begin{equation}
\label{eqn::ReLU_regression_general}
\argmin\limits_{
(\mathbf{W}_n)_{n=1}^k,
(\mathbf{b}_n)_{n=1}^k}
\mathcal{L}((\mathbf{W}_n)_{n=1}^k, (\mathbf{b}_n)_{n=1}^k)
 = \sum_{i=1}^{N} \left[\themodel(\mathbf{x}_i) - {y}_i\right]^2.
\end{equation}
Let us state two simple yet in our opinion useful Lemmata.
\begin{lem}[Constant input]
If $\mathbf{x}_1 = \ldots = \mathbf{x}_N$, then the solution to regression~(\ref{eqn::ReLU_regression_general}) has a constant output $\themodel \equiv \frac{y_1+\ldots+y_N}{N}$ (the mean of the values in data).
\end{lem}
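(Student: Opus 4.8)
The plan is to exploit the fact that the network is a deterministic function of its input, so equal inputs are mapped to equal outputs, which collapses the entire problem to a one-dimensional least-squares fit. First I would observe that since $\mathbf{x}_1 = \ldots = \mathbf{x}_N =: \mathbf{x}$, for any fixed choice of weights $(\mathbf{W}_n)_{n=1}^k, (\mathbf{b}_n)_{n=1}^k$ the outputs $\themodel(\mathbf{x}_i)$ are all equal to a single scalar $a := \themodel(\mathbf{x})$, independent of $i$. Substituting into~(\ref{eqn::ReLU_regression_general}), the loss depends on the weights only through this scalar:
\[
\mathcal{L} = \sum_{i=1}^N (a - y_i)^2.
\]

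Next I would minimize the right-hand side as a function of $a \in \RR$. This is the classical scalar least-squares problem, and expanding and differentiating (or completing the square) gives the unique minimizer $a^\star = \tfrac{1}{N}\sum_{i=1}^N y_i$, the mean of the targets, with residual value $\sum_{i=1}^N (y_i - a^\star)^2$. This already shows that no choice of weights can push the loss below this value.

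Finally I would verify that $a^\star$ is actually realizable by the network, so that the lower bound is attained and the argmin indeed has constant output $a^\star$. This is where a little care is needed, and it is the only genuine subtlety I would flag: a priori the set of achievable outputs $\{\themodel(\mathbf{x}) : \text{weights}\}$ might be a proper subset of $\RR$. But the last layer is affine, $\themodel(\mathbf{x}) = \mathbf{W}_{k}\mathbf{h}_{k-1} + \mathbf{b}_{k}$ with a scalar output bias $\mathbf{b}_k$, so for any fixed values of the remaining weights the output sweeps out all of $\RR$ as $\mathbf{b}_k$ varies; in particular one may take $\mathbf{b}_k = a^\star - \mathbf{W}_{k}\mathbf{h}_{k-1}$ to attain $a = a^\star$. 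Hence every minimizer of $\mathcal{L}$ produces the constant output $\themodel \equiv \frac{y_1 + \ldots + y_N}{N}$, which is exactly the claim. Everything beyond this surjectivity check is the elementary remark that a function cannot distinguish equal inputs, so I expect the proof to be short.
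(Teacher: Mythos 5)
Your proof is correct and follows the same route as the paper, which simply states that the result is immediate from the definitions together with the fact that $\frac{y_1+\ldots+y_N}{N} = \argmin_{c} \sum_{i=1}^{N}(c-y_i)^2$. Your explicit realizability step (adjusting the output bias $\mathbf{b}_k$ so the network attains the mean) is exactly the detail the paper compresses into ``obvious from the definitions,'' and flagging it is a reasonable bit of extra care rather than a departure in method.
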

\begin{proof}
Obvious from the definitions and the fact, that $\frac{y_1+\ldots+y_N}{N} = \argmin\limits_{c} \sum_{i=1}^{N}(c-y_i)^2$.
\end{proof}
\begin{lem}
If there holds $\mathbf{W}_1 \mathbf{x}_i < - \mathbf{b}_1$ for all $i$-s, then the model $\themodel$ has a constant output. Moreover, applying local optimization does not change the values of $\mathbf{W}_1$, $\mathbf{b}_1$.
\end{lem}
\begin{proof}
Straightforward from the definitions.
\end{proof}
Combining these two lemmata yields:
\begin{cor}
\label{cor::the_only_corollary}
If for any $1 \le j \le k$ there holds $ \mathbf{W}_n  \mathbf{h}_{n-1} < - b_n$ for all $i$-s then, after the training, the model $\themodel$ will output $ \frac{y_1+\ldots+y_N}{N}$.
\end{cor}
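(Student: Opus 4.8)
The plan is to show that a uniformly saturated layer collapses the data-dependence of everything above it, after which the two preceding lemmata apply almost directly. First I would unpack the hypothesis: the condition $\mathbf{W}_j \mathbf{h}_{j-1}(\mathbf{x}_i) + \mathbf{b}_j < \mathbf{0}$ (component-wise) for every $i$ says that every rectifier unit in layer $j$ is in its saturated regime on every datapoint, so $\mathbf{h}_j(\mathbf{x}_i) = \mathrm{ReLU}(\mathbf{W}_j \mathbf{h}_{j-1}(\mathbf{x}_i) + \mathbf{b}_j) = \mathbf{0}$ for all $i$. The key consequence is that these activations no longer depend on $i$.

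Next I would view layers $j+1, \ldots, k$ as a sub-network whose inputs are the vectors $\mathbf{h}_j(\mathbf{x}_i)$. Since these inputs all coincide (they are all $\mathbf{0}$), the Constant input lemma (Lemma 1) applies to this sub-network: the loss-minimizing behaviour of the upper layers is to emit a single constant, and the optimal such constant under squared error is the mean $\frac{y_1 + \ldots + y_N}{N}$. This already pins down the target output value; what remains is to verify that local optimization can realize it without disturbing the lower layers.

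This is exactly the role of Lemma 2, and the only step requiring real care is promoting that lemma from layer $1$ to an arbitrary layer $j$. I would argue that, because all units of layer $j$ are dead, no gradient propagates back through them: by the chain rule $\partial \mathcal{L}/\partial \mathbf{W}_n$ and $\partial \mathcal{L}/\partial \mathbf{b}_n$ vanish for every $n \le j$. Hence gradient-based local optimization freezes the parameters of layers $1, \ldots, j$, the saturation is preserved, and the problem reduces to fitting a constant with the upper layers, which converges to the mean by Lemma 1.

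The main obstacle is making this freezing argument robust rather than merely pointwise: one must ensure the layer stays saturated \emph{throughout} optimization, not just at the initial weights. Here the strict inequality in the hypothesis is what I would lean on --- strict saturation is an open condition, so it survives sufficiently small perturbations of the weights, and since the gradient in those directions is already zero, local descent produces no such perturbation. With this stability in hand, the three pieces --- constant upper-layer input, frozen lower layers, and the optimal constant being the mean --- combine to give the stated conclusion.
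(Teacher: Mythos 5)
Your proof is correct and takes essentially the same route as the paper, whose entire argument for this corollary is the single line ``Combining these two lemmata yields'' --- i.e., exactly your combination of the Constant input lemma (applied to the sub-network above the saturated layer) with the frozen-weights lemma. In fact you supply details the paper leaves implicit, namely the promotion of Lemma 2 from the first layer to an arbitrary layer $j$ via the vanishing-gradient argument and the observation that strict saturation is an open, hence stable, condition.
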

We will denote $M(\{ a_1,\ldots,a_L \}) = \frac{a_1 + \ldots + a_L}{L}$ the mean of the numbers $a_1, \ldots, a_L$.
\begin{defin}
We say that the dataset $(\mathbf{x}_i, y_i)$ is \textbf{decent} if there exists $r$ such that $M(\{y_p : \mathbf{x}_p = \mathbf{x}_r\} \neq M(\{y_p : p=1, \ldots, N \})$.
\end{defin}
\begin{thm}
\label{thm:localminima}
Let ${\boldsymbol{\theta}} = ((\mathbf{W}_n)_{n=1}^k, (\mathbf{b}_n)_{n=1}^k)$ be any point in the parameter space satisfying $\mathbf{W}_n \mathbf{h}_n(\mathbf{x}_i) < - \mathbf{b}_n$ (coordinate-wise) for all $i$-s. Then
\begin{itemize}
\item[ i)]${\boldsymbol{\theta}}$ is a local minimum of the error surface,
\item[ii)] if the first layer contains at least 3 neurons and if the dataset $(\mathbf{x}_i, y_i)$ is decent, then ${\boldsymbol{\theta}}$ is not a global minimum.
\end{itemize}
\end{thm}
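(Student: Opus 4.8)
The plan is to handle both parts by exploiting that at $\boldsymbol{\theta}$ the first layer is \emph{strictly} saturated on every datapoint, i.e. $\mathbf{W}_1\mathbf{x}_i+\mathbf{b}_1<0$ coordinate-wise for all $i$. For part~(i), these are finitely many strict inequalities in the entries of $(\mathbf{W}_1,\mathbf{b}_1)$, hence they persist on an open neighborhood $U$ of $\boldsymbol{\theta}$. On $U$ one has $\mathbf{h}_1(\mathbf{x}_i)=\mathbf{0}$ for every $i$, and pushing a zero vector up through $\mathbf{h}_n=\mathrm{ReLU}(\mathbf{W}_n\mathbf{h}_{n-1}+\mathbf{b}_n)$ makes every higher layer — and therefore the output $\themodel(\mathbf{x}_i)$ — a constant $c$ that depends neither on $i$ nor on $(\mathbf{W}_1,\mathbf{b}_1)$. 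Thus $\mathcal{L}$ restricted to $U$ collapses to the one-dimensional parabola $g(c)=\sum_{i=1}^N(c-y_i)^2$, uniquely minimized at $c=\bar y:=M(\{y_1,\dots,y_N\})$. Since the constant output at $\boldsymbol{\theta}$ is this mean (Corollary~\ref{cor::the_only_corollary}), we obtain $\mathcal{L}(\boldsymbol{\theta})=g(\bar y)\le g(c)=\mathcal{L}(\boldsymbol{\theta}')$ for every $\boldsymbol{\theta}'\in U$, so $\boldsymbol{\theta}$ is a local minimum (non-strict, as it lies on a flat manifold of equivalent weights).

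For part~(ii) the strategy is to construct, \emph{away} from $\boldsymbol{\theta}$, a network using exactly three first-layer units that beats the constant $\bar y$. Decency gives an index $r$ with group mean $\mu_r:=M(\{y_p:\mathbf{x}_p=\mathbf{x}_r\})\ne\bar y$. First I would choose $\mathbf{a}\in\RR^d$ with $\mathbf{a}^\top(\mathbf{x}_j-\mathbf{x}_r)\ne0$ for every distinct $\mathbf{x}_j\ne\mathbf{x}_r$ (a generic $\mathbf{a}$ works, excluding finitely many hyperplanes), write $t_j=\mathbf{a}^\top\mathbf{x}_j$, and pick $\eta>0$ smaller than $\min_j|t_j-t_r|$. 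The three units then assemble the triangular bump $\phi(t)=\mathrm{ReLU}(t-t_r+\eta)-2\,\mathrm{ReLU}(t-t_r)+\mathrm{ReLU}(t-t_r-\eta)$, which satisfies $\phi(t_r)=\eta$ and $\phi(t_j)=0$ at every other distinct projected input. Realizing $\themodel(\mathbf{x})=\bar y+\alpha\,\phi(\mathbf{a}^\top\mathbf{x})$ leaves all outputs at inputs $\ne\mathbf{x}_r$ equal to $\bar y$ while shifting the $\mathbf{x}_r$-group to $\bar y+\alpha\eta$; differentiating the loss in $\alpha$ at $0$ yields $2\eta\,|\{p:\mathbf{x}_p=\mathbf{x}_r\}|\,(\bar y-\mu_r)\ne0$, so a small $\alpha$ of the appropriate sign pushes the loss strictly below $g(\bar y)=\mathcal{L}(\boldsymbol{\theta})$, proving $\boldsymbol{\theta}$ is not global (any surplus first-layer units are switched off by a negative bias).

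The step I expect to be the main obstacle is routing this bump to the output when $k>2$, since the hypothesis constrains only the first layer's width and the intermediate layers may be narrow and themselves carry ReLUs. I would resolve it by keeping every intermediate unit in its linear regime: as $\phi$ is bounded, a sufficiently large positive bias at each hidden layer forces positive pre-activations, so each such layer acts as an affine map and the whole stack reduces to an affine function of $\phi$, which is then tuned to $\bar y+\alpha\phi$ at the output (the additive constant absorbed into $\mathbf{b}_k$). Finally, this construction explains the threshold $3$: a single affine combination of two ReLUs can only produce a ramp that is nonzero on an entire half-space of inputs, so it cannot isolate a possibly interior point $\mathbf{x}_r$ from all the others, whereas the localized bump needed to do so requires precisely three ReLUs.
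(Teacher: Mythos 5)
Your proposal is correct and follows essentially the same route as the paper's own proof: part (i) via persistence of the strict first-layer saturation, which makes the loss near $\boldsymbol{\theta}$ depend only on the network's constant output (pinned to the label mean via Corollary~\ref{cor::the_only_corollary}), and part (ii) via the same three-ReLU triangular bump along a direction $\mathbf{v}$ (resp.\ $\mathbf{a}$) that isolates $\mathbf{x}_r$ from the other inputs and shifts only that group's output away from the global mean. The one place you are actually more careful than the paper is the deep case $k>2$: the paper propagates the bump through identity matrices and zero biases, which silently assumes the intermediate values $\mu,\nu$ are nonnegative (otherwise the hidden ReLUs clip them), whereas your device of large positive biases keeping every intermediate unit in its linear regime, with the constant absorbed into $\mathbf{b}_k$, works for arbitrary label values.
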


\begin{proof}
\label{proof:localminima}
Claim i) is a direct consequence of Corollary~\ref{cor::the_only_corollary}. It remains to prove ii). For that it is sufficient to show an example of a set of weighs $\hat{{\boldsymbol{\theta}}} = ((\hat{\mathbf{W}}_n)_{n=1}^k, (\hat{\mathbf{b}}_n)_{n=1}^k)$ such that $\mathcal{L}((\mathbf{W}_n)_{n=1}^k, (\mathbf{b}_n)_{n=1}^k) > \mathcal{L}((\hat{\mathbf{W}}_n)_{n=1}^k, (\hat{\mathbf{b}}_n)_{n=1}^k)$.
Let $r$ be such  that $M(\{y_p : \mathbf{x}_p = \mathbf{x}_r\}) \neq M(\{y_p : p=1, \ldots, N\})$. Such point exists by assumption that the dataset is decent. Let $\mathcal{H}$ be a hyperplane passing through $\mathbf{x}_r$ such that none of the points $\mathbf{x}_s \neq \mathbf{x}_r$ lies on $\mathcal{H}$. Then there exists a vector $\mathbf{v}$ such that $|\mathbf{v}^T (\mathbf{x}_s-\mathbf{x}_r)| > 2$ for all $\mathbf{x}_s \neq \mathbf{x}_r$. Let $\gamma = \mathbf{v}^T \mathbf{x}_r$. %We can assume that $\mathbf{v}^T \mathbf{x}_r \ge 0$ (otherwise we replace $\mathbf{v}$ with $-\mathbf{v}$.
We define $\mathbf{W}_1$ in such a way that the first row of $\mathbf{W}_1$ is $\mathbf{v}$
, the second row is $2 \mathbf{v}$ and the third one is $\mathbf{v}$ again, and if the first layer has more than 3 neurons, we put all the remaining rows of $\mathbf{W}_1$ to be equal zero. We choose the first three biases of $\mathbf{b}_1$ to be $-\gamma+1$, $-2\gamma$ and $-\gamma -1$ respectively. We denote $\mu = M(\{y_p : \mathbf{x}_p \neq \mathbf{x}_r\})$ and $\nu = M(\{y_p : \mathbf{x}_p = \mathbf{x}_r\})$.  We then choose $\mathbf{W}_2$ to be a matrix whose first row is $(\nu - \mu , \mu -\nu, \nu-\mu, 0, \ldots, 0)$ and the other rows are equal to $0$. Finally, we choose the bias vector $\mathbf{b}_2 = (\mu, 0, \ldots, 0)^T$.\\
If our network has only one layer the output is
$$
(\nu - \mu) \mathrm{ReLU}(\mathbf{v}^T \mathbf{x}_p - \gamma + 1) - (\nu - \mu) \mathrm{ReLU}(2 \mathbf{v}^T \mathbf{x}_p - 2 \gamma) + (\nu - \mu) \mathrm{ReLU}(\mathbf{v}^T \mathbf{x}_p - \gamma - 1) + \mu.
$$
For every $\mathbf{x}_p = \mathbf{x}_r$ this yields $(\nu - \mu) \cdot 1- 0 + 0 + \mu  = \nu$. For any $\mathbf{x}_p \neq \mathbf{x}_r$ we either have $\mathbf{v}^T \mathbf{x}_p - \gamma < -2$ yielding  $0 - 0 + 0 + \mu  = \mu$ or $\mathbf{v}^T \mathbf{x}_p - \gamma > 2$ yielding $(\nu - \mu) (\mathbf{v}^T \mathbf{x}_p - \gamma + 1) - (\nu - \mu) (2 \mathbf{v}^T \mathbf{x}_p - 2 \gamma) + (\nu - \mu) (\mathbf{v}^T \mathbf{x}_p - \gamma - 1) + \mu = \mu$.\\
In case the network has more than 1 hidden layer we set all $\mathbf{W}_n = \mathbf{I}$ (identity matrix) and $\mathbf{b}_n = \mathbf{0}$ for $n = 3, \ldots, k$.\\
If we denote $\bar{\mu} = M(\{ y_p:p= 1, \ldots , N \})$ (mean of all labels), we get:
\begin{equation*}
\begin{aligned}
\mathcal{L}((\hat{\mathbf{W}}_n)_{n=1}^k, (\hat{\mathbf{b}}_n)_{n=1}^k) &= \sum_{\mathbf{x}_p \neq \mathbf{x}_r} (y_i - \mu)^2 + \sum_{\mathbf{x}_p = \mathbf{x}_r} (y_i - \nu)^2 < \\
\sum_{\mathbf{x}_p \neq \mathbf{x}_r} (y_i - \bar{\mu})^2 + \sum_{\mathbf{x}_p = \mathbf{x}_r} (y_i - \bar{\mu})^2 &= \sum_{y_i} (y_i - \bar{\mu})^2 = \mathcal{L}((\mathbf{W}_n)_{n=1}^k, (\mathbf{b}_n)_{n=1}^k).
\end{aligned}
\end{equation*}

We used the fact that for any finite set $A$ the value $M(A)$ is a strict minimum of $f(c) = \sum_{a \in A} (a-c)^2$ and the assumption that $\nu \neq \bar{\mu}$.
\end{proof}

\section{Discussion}
\label{sec:discussion}

Previous results \citep{Dauphin13, Saxe-ICLR2014, ChoromanskaHMAL15} provide
insightful description of the error surface of deep models under general
assumptions divorced from the specifics of the architecture or data. While such
analysis is very valuable not only for building up the intuition but also for
the development of the tools for studying neural networks, it only provides one
facade of the problem.
In this work focused on constructing scenarios in which learning fails, in the hope that
they will help setting up right assumptions for convergence theorems of neural
networks in practical scenarios.

Similar to \cite{lin2016cheap} we put forward a hypothesis that the learning is
only well behaved conditioned on the structure of the data. Understanding of the structure of the error surface
is an extremely challenging problem. We believe that as such, in agreement with
a scientific tradition, it should be approached by gradually building up a
related knowledge base, both by trying to obtain positive results (possibly
under weakened assumptions, as it was done so far) and by studying the
obstacles and limitations arising in concrete examples.

\subsubsection*{Acknowledgments}
We would want to thank Neil Rabinowitz for insightful discussions.

\bibliography{iclr2016_conference}

\end{document}